\documentclass[10pt,twocolumn,letterpaper]{article}

\usepackage[pagenumbers]{cvpr} 

\usepackage{microtype}

\DeclareMathOperator{\sg}{sg}

\DeclareMathOperator{\Pool}{Pool}
\newcommand{\E}{\mathbb{E}}
\newcommand{\KL}{\mathrm{D}_{\mathrm{KL}}}

\usepackage{xspace}
\newcommand{\method}{DyMD\xspace}
\usepackage{algorithm}
\usepackage{algpseudocode}
\usepackage{amsthm}
\usepackage{multirow}
\usepackage{colortbl}
\definecolor{intdrei}{RGB}{214,230,244}
\newtheorem{proposition}{Proposition}

\makeatletter
\@ifpackageloaded{lineno}{%
  \let\ptdmd@LN@legacy@makecol\@LN@makecol
  \RemoveFromHook{build/column/before}[lineno]
  \AtBeginDocument{\let\@LN@makecol\ptdmd@LN@legacy@makecol}%
}{}
\makeatother

\definecolor{cvprblue}{rgb}{0.21,0.49,0.74}
\usepackage[pagebackref,breaklinks,colorlinks,allcolors=cvprblue]{hyperref}

\def\paperID{*****} 
\def\confName{CVPR}
\def\confYear{2027}

\title{\method: Preserving Interaction Dynamics through Distribution Matching Distillation in Few-Step Video World Models}

\author{
Haojun Xu$^{1,2}$ \quad
Jie Huang$^{2,\ddagger}$ \quad
Xin Lu$^{2}$ \quad
Mingchen Zhong$^{2}$ \quad
Zihao Fan$^{2}$ \quad
Linjiang Huang$^{1,\dagger}$ \quad
Si Liu$^{1}$\\[3pt]
$^{1}$Beihang University
\quad
$^{2}$JD Future Academy\\
{$^{*}$Equal contribution.
\quad
$^{\dagger}$Corresponding author.
\quad
$^{\ddagger}$Project leader.}}

\begin{document}
\maketitle
\begin{abstract}
Large video diffusion models provide rich priors for embodied prediction and learning. However, generating visual futures with these models requires costly iterative sampling.
Distribution Matching Distillation (DMD) enables few-step video generation,
but suppress robot--object motion while preserving visual quality. Examining DMD's teacher and fake-score signals, we find that weak re-noising keeps the teacher posterior concentrated
near motion-deficient rollouts, limiting motion-restoring guidance. Meanwhile, stronger-motion rollouts tend to incur larger fake-score fitting errors, which hinder
the generator's learning of interaction dynamics.
We propose \method, a DMD framework that adapts both teacher supervision
and critic fitting to the evolving student.
Temporal affinity--conditioned re-noise sampling adapts the timestep
distribution to each rollout's current interaction fidelity by mixing
the base schedule with a teacher prior motivated by local posterior
variation, thereby balancing motion recovery and appearance refinement.
To better track stronger-motion rollouts which are empirically associated with higher flow-matching loss, dynamics-guided fake-score
tracking uses a noise-conditioned predictor to estimate noise-relative
fitting difficulty, then upweights
predicted-hard rollouts in the critic loss.
Using \method, we distill a 14B teacher into a four-step 1.3B student with no auxiliary
modules at inference. On embodied-video benchmarks, the student improves
R-Bench task adherence by $9.6$ percentage points and PAI-Bench-G Domain
score by $5.1$ points over Base DMD while maintaining comparable visual
quality. As a backbone for downstream action planning, our student
achieves $34\%$ mean success across two WorldArena tasks, compared with
$16\%$ for Base DMD.
\end{abstract}

\section{Introduction}
\label{sec:intro}

\begin{figure}[t]
  \centering
  \IfFileExists{fig/teaser.pdf}{%
    \includegraphics[page=1,width=\columnwidth,height=0.28\textheight,
      keepaspectratio]{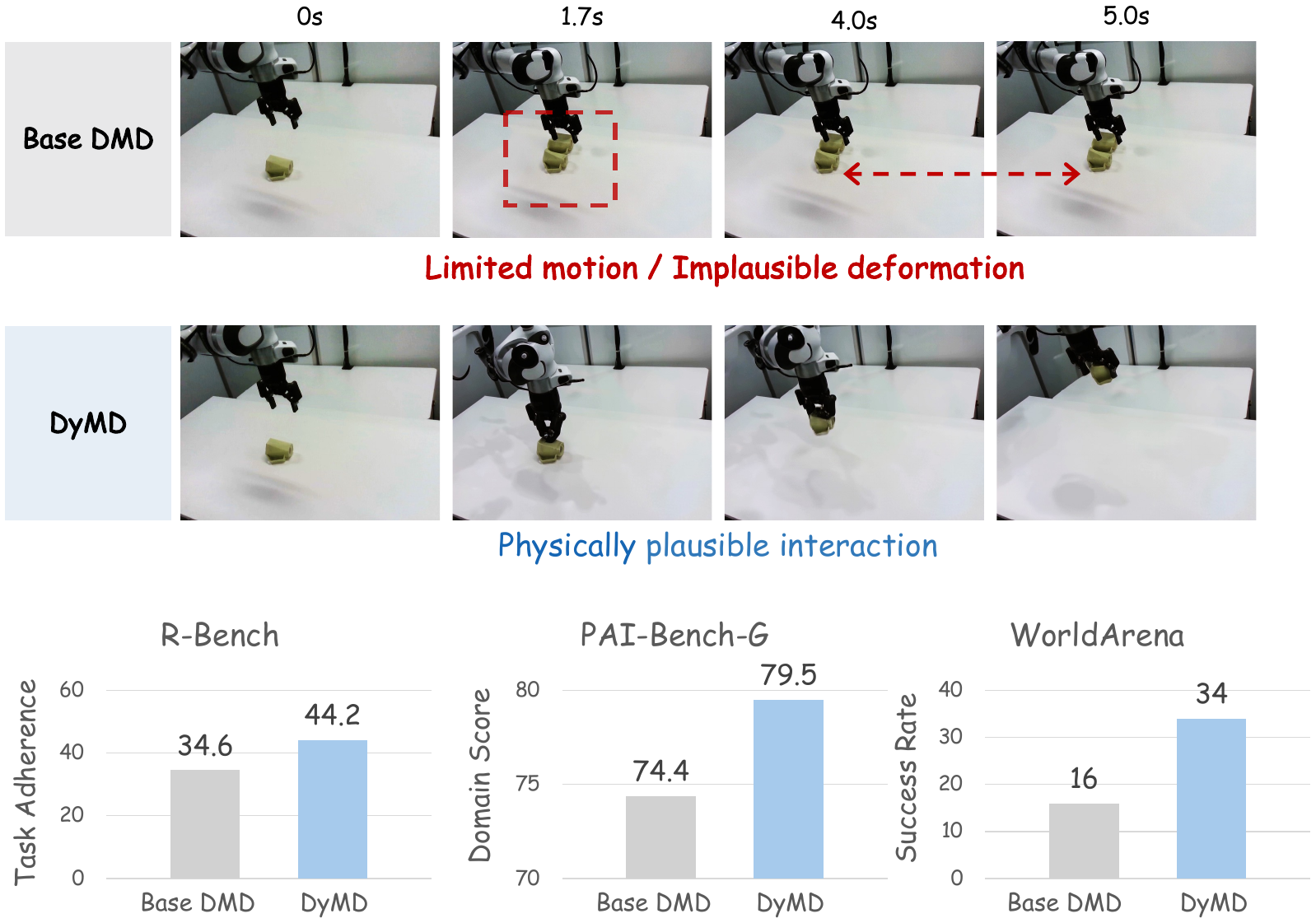}%
  }{%
    \fbox{\parbox[c][0.28\textheight][c]{\dimexpr\columnwidth-2\fboxsep-2\fboxrule\relax}{%
      \centering\large\color{gray}Teaser figure placeholder\\[1em]
      \normalsize Video comparison\\and benchmark results}}%
  }
  \caption{\textbf{Preserving interaction dynamics in few-step video world models.}
  Top: Base DMD shows implausible deformation and limited motion; \method
  preserves a physically plausible robot--object interaction. Bottom: gains in
  R-Bench task adherence, PAI-Bench-G domain score, and WorldArena planning
  success. Both are 1.3B four-step students.}
  \label{fig:teaser}
\end{figure}

\begin{figure*}[t]
  \centering
  \begin{minipage}[t]{\columnwidth}
    \vspace{0pt}
    \centering
    \includegraphics[width=\linewidth]{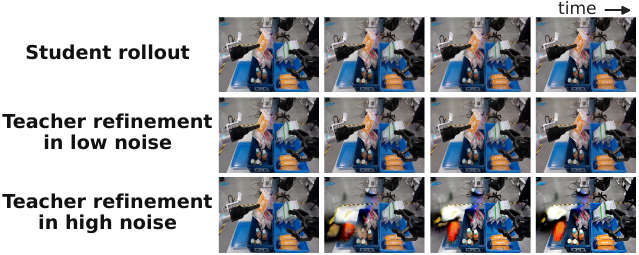}
    \caption{\textbf{Motion recovery depends on the re-noising level.}
    We re-noise the same nearly static four-step DMD rollout at different noise
    levels and refine it with the frozen PF-Wan teacher. Low re-noising levels
    leave the video nearly static, while higher levels recover motion but introduce visible artifacts. Four frames are shown in temporal order.}
    \label{fig:teacher-recovery}
  \end{minipage}\hfill
  \begin{minipage}[t]{\columnwidth}
    \vspace{0pt}
    \centering
    \includegraphics[width=\linewidth]{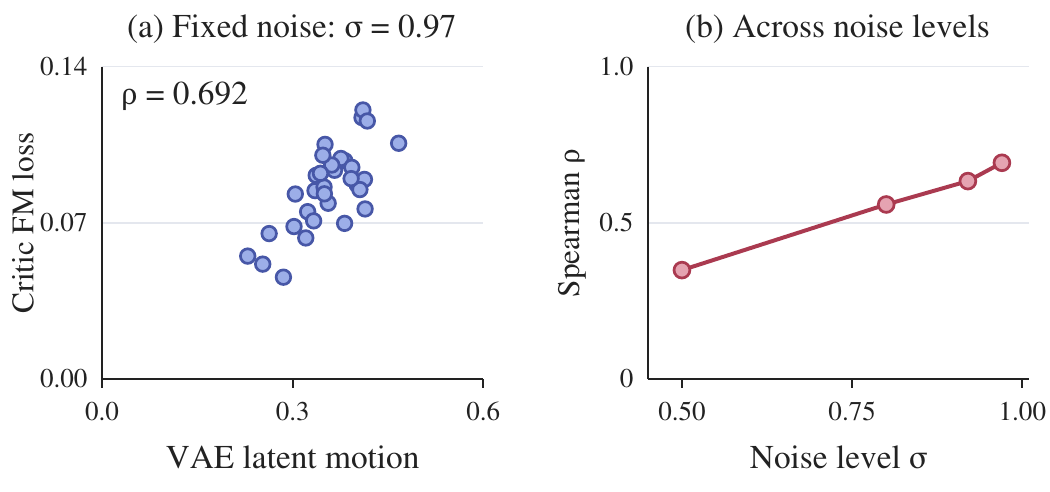}
    \caption{\textbf{Stronger latent motion is associated with higher critic FM loss.}
    Left: 32 frozen rollouts at $\sigma=0.97$.
    Right: Spearman correlation between VAE latent motion magnitude and
    critic FM loss across the same rollouts, computed separately at four
    noise levels. Protocol in Appendix~\ref{app:critic-predictability}.}
    \label{fig:motion-tracking-probe}
  \end{minipage}
\end{figure*}

Video world models support embodied prediction and learning by generating visual futures and providing representations for downstream policies~\cite{rovidx,dreamgen,du2023learning}.
For robotic manipulation, useful predictions must capture coordinated robot--object motion as well as scene appearance. However, generating such visual futures with large video diffusion models requires costly iterative sampling~\cite{wan,cosmos}, motivating distillation into few-step generators. Distribution Matching Distillation
(DMD)~\cite{dmd,dmd2} trains a few-step student using the score difference
between a frozen teacher and an online fake-score model (the critic) that
estimates the score of the evolving student distribution. But the challenge is
to retain task-relevant interaction dynamics under this reduced sampling
budget.

Our starting observation is that DMD preserve visual quality
while suppressing interaction dynamics. When distilling PF-Wan, a teacher
fine-tuned for robotic manipulation~\cite{physisforcing}, vanilla DMD
produces sharp videos with weak robot--object motion (Fig.~\ref{fig:teaser}).
To understand this failure, we examine the two components of DMD's score-difference update: the teacher guidance and the critic's estimate of the evolving student score.

First, the teacher's ability to correct this motion deficit depends on
the re-noising level. Weak re-noising keeps the teacher's clean-video
posterior concentrated near a motion-deficient student rollout, providing
little corrective guidance. Stronger re-noising of the same rollout
restores robot--object motion but may introduce artifacts
(Fig.~\ref{fig:teacher-recovery}). Typical schedules use the same timestep
distribution for all rollouts, regardless of their current interaction
fidelity. This motivates adapting re-noise sampling to balance motion
recovery and appearance refinement for each rollout.

Second, preserving interaction dynamics requires the critic to accurately
track the evolving student distribution, since errors in its student-score
estimate perturb the generator's score-difference update~\cite{dmd2,rtdmd}.
At fixed noise levels, our frozen-snapshot probe finds that rollouts with
stronger motion in VAE latent space tend to incur higher flow-matching
(FM) loss (Fig.~\ref{fig:motion-tracking-probe}), suggesting that these
rollouts are harder for the critic to fit. Thus, inaccurate critic scores on these rollouts hinder the generator from learning interaction dynamics effectively. To better track these
stronger-motion rollouts, we therefore use fitting difficulty to allocate
the critic's limited training effort.

These two challenges motivate \method,
which couples rollout-adaptive teacher supervision with
difficulty-aware critic fitting. To balance motion recovery and
appearance refinement, temporal affinity--conditioned re-noise
sampling adapts each rollout's mixture of a teacher prior and the
base schedule. The prior emphasizes pronounced changes in the
teacher's denoising direction, motivated by the connection between
teacher-velocity turning and local posterior variation. Temporal
affinity to a paired training video, measured in frozen V-JEPA
features~\cite{vjepa21}, assesses the rollout's current interaction
fidelity and controls this mixture: low-affinity rollouts receive
more teacher-prior sampling, while well-matched rollouts retain
more base-schedule coverage. To improve the critic's score estimates for rollouts with stronger motion, dynamics-guided fake-score tracking uses a noise-conditioned
predictor to estimate noise-relative fitting difficulty from
temporal dynamics in VAE latents. Rollouts predicted to be harder than typical samples at the same noise level receive larger
weights in the critic loss, without increasing the number of critic updates.

We distill the PF-Wan teacher into a four-step 1.3B student. Both components
operate only during training, with no auxiliary modules required at inference. Experiments on R-Bench~\cite{rovidx},
PAI-Bench-G~\cite{zhou2026pai}, and EZS-Bench~\cite{abotphysworld} show improved
interaction fidelity over Base DMD: R-Bench task adherence rises by $9.6$
percentage points and PAI-Bench-G Domain score by $5.1$ points, with
comparable visual quality. As a backbone for downstream action planning,
our student achieves $34\%$ mean success across two WorldArena
tasks~\cite{worldarena}, compared with $16\%$ for Base DMD.

Our contributions are threefold:
\begin{itemize}
  \item We analyze motion degradation in DMD through its teacher and
  fake-score signals, identifying teacher-side mode locking and
  motion-related critic fitting difficulty as obstacles to preserving
  interaction dynamics.

  \item We propose \method, combining temporal affinity--conditioned
  re-noise sampling with dynamics-guided fake-score tracking to adapt
  teacher supervision and critic fitting to the current student
  (Sections~\ref{sec:curvature}--\ref{sec:critic-reweight}).

  \item We demonstrate improved interaction fidelity over Base DMD on
  three embodied-video benchmarks and higher mean success on two
  downstream action-planning tasks with a four-step 1.3B student
  (Section~\ref{sec:experiments}).
\end{itemize}

\section{Related Work}
\label{sec:related}

\paragraph{Video World Models.}
Large-scale video diffusion and flow models learn expressive priors for
visual generation~\cite{wan,hunyuanvideo,cosmos,sora}. Robot video corpora and
video-based world models extend these priors to embodied prediction and
learning~\cite{rovidx,dreamgen,genie,du2023learning}, where object persistence, contact
consistency, and coordinated robot--object motion are essential.
VideoJAM jointly models appearance and motion to improve temporal
coherence~\cite{videojam}. ABot-PhysWorld uses physics-aware preference
optimization~\cite{abotphysworld}, while PhysisForcing introduces
region-focused physics supervision for robotic
interactions~\cite{physisforcing}. These advances improve the dynamics
available in a video teacher. Our focus is transferring those dynamics to a
few-step student through adaptive timestep sampling and critic fitting.

\paragraph{Diffusion Model Distillation.}
Few-step distillation transfers pretrained diffusion and flow models through
trajectory consistency~\cite{consistencymodel,lcm} or distribution-level
supervision, including adversarial learning~\cite{sdxlturbo} and score-based
distribution matching~\cite{dmd,dmd2}. Several extensions improve the supervision used by DMD. Reward-guided
methods combine distribution matching with preference
optimization~\cite{rewardforcing,dmdr}, while Adaptive Matching Distillation
reshapes score guidance to favor higher-quality samples~\cite{amd}.
For temporal fidelity, Adaptive Video Distillation combines adaptive paired
regression with temporal regularization~\cite{avd}. In generative video
compression, Adaptive Score Distillation attenuates updates that disagree
with a ground-truth reconstruction direction~\cite{asd}.
CoDMD adds a relational objective over samples and frames, reusing teacher
and fake-score predictions to preserve their dependency
structure~\cite{zhang2026codmd}.

\paragraph{Timestep Sampling and Fake-Score Tracking.}
Noise allocation and fake-score accuracy offer another axis of improvement.
Improved DDPM uses loss-aware timestep importance sampling to reduce gradient
noise~\cite{nichol2021improved}, while Min-SNR balances timestep losses through
SNR-based weighting~\cite{hang2023efficient}.
Phased DMD progressively matches intermediate distributions and derives
score-matching objectives within SNR subintervals for specialized
experts~\cite{fan2026phased}. DMD2 stabilizes fake-score estimation through
two-time-scale updates~\cite{dmd2}; RTDMD's ambient-consistent stage adds a
cross-timestep consistency regularizer to help fake-score model track
the generator distribution under limited updates~\cite{rtdmd}.
Learning Loss predicts sample losses for active data acquisition~\cite{yoo2019learning}, providing a related perspective on difficulty-based allocation.  Our approach uses temporal affinity to adapt each student rollout's re-noise schedule and a noise-conditioned predictor estimates noise-relative
fitting difficulty from latent temporal dynamics to reweight critic samples.
\section{Method}
\label{sec:method}

We first review DMD's generator and fake-score updates and the challenges
they pose for preserving interaction dynamics
(Section~\ref{sec:dmd-opd}).
Building on this analysis, \method (Fig.~\ref{fig:pipeline})
constructs a teacher-velocity-turning prior and uses temporal affinity
to adapt its mixture with the base schedule to each rollout's current
interaction fidelity
(Sections~\ref{sec:curvature} and~\ref{sec:router}).
It further predicts noise-relative fitting difficulty from latent
temporal dynamics to reweight critic samples within a fixed update
budget (Section~\ref{sec:critic-reweight}).

\begin{figure*}[t]
  \centering
  \includegraphics[width=\textwidth,trim=5bp 13bp 3bp 20bp,clip]{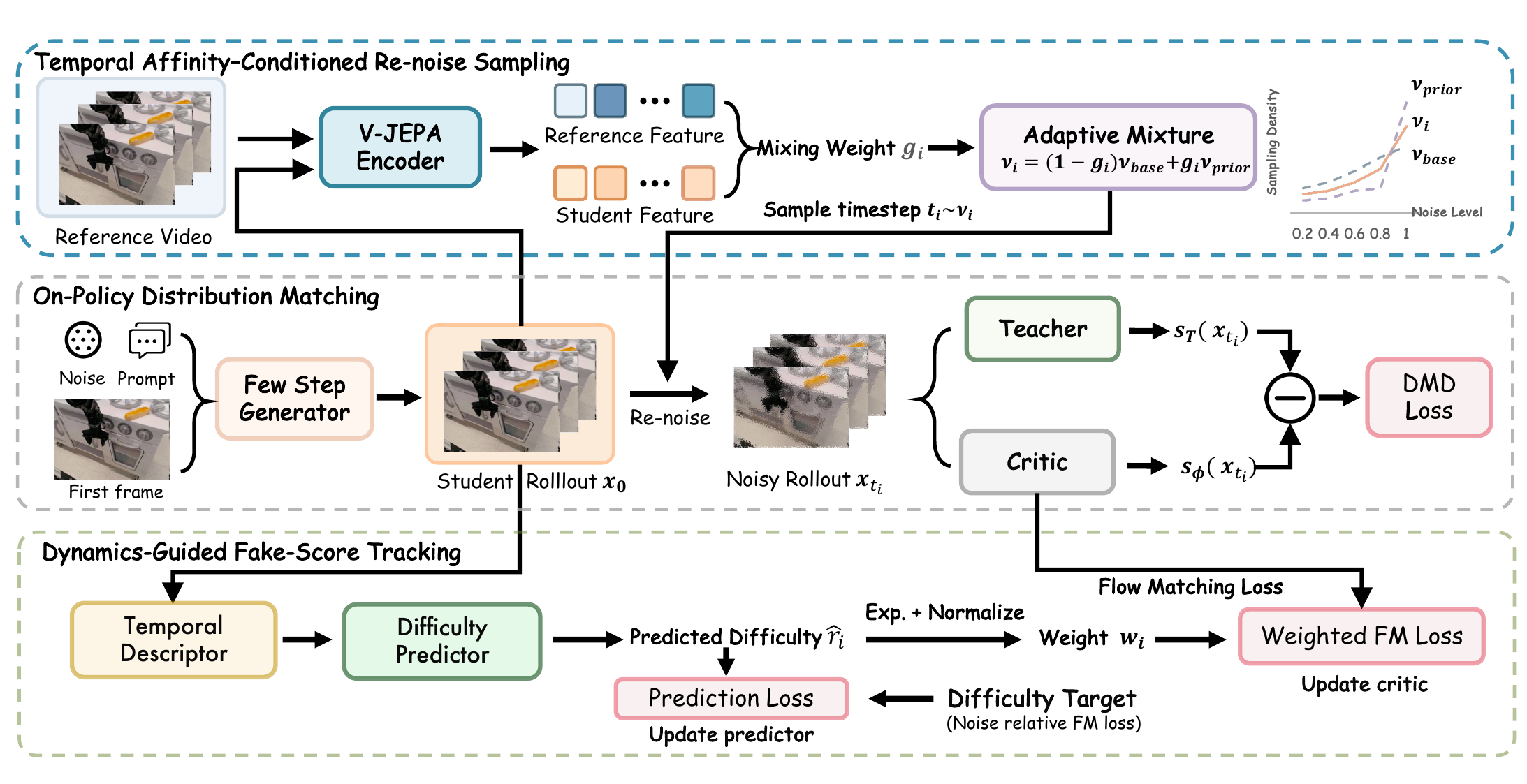}
  \caption{\textbf{Overview of \method.}
  Top (Secs.~\ref{sec:curvature}--\ref{sec:router}): temporal affinity controls
  each rollout's mixture of the teacher prior and base schedule.
  Middle (Sec.~\ref{sec:dmd-opd}): the teacher--critic score difference updates
  the student. Bottom (Sec.~\ref{sec:critic-reweight}): a dynamics- and noise-conditioned predictor assigns
  mean-one weights to critic FM losses without changing their targets.
  Generator and critic share the timestep sampling rule; all auxiliary
  modules are training-only.}
  \label{fig:pipeline}
\end{figure*}

\subsection{Preliminaries: DMD and Its Challenges}
\label{sec:dmd-opd}

DMD~\cite{dmd,dmd2} distills a multi-step diffusion teacher into a
few-step generator $G_\theta$, which produces a latent rollout
$x_0^S=G_\theta(\xi,c)$ from noise $\xi$ and text conditions $c$.
It minimizes a time-averaged reverse KL between the re-noised student
and teacher distributions, yielding the score-difference update
\begin{equation}
  \nabla_\theta\mathcal L_G
  \approx\E\!\left[
  \left(\frac{\partial x_0^S}{\partial\theta}\right)^\top\!
  \bigl(s_\phi-s_T\bigr)\right],
  \label{eq:dmd-gradient}
\end{equation}
where $s_T$ is the frozen teacher score and $s_\phi$ is the student-score
estimate provided by an online fake-score model. Both are
evaluated at the re-noised rollout
$x_t=(1-\sigma_t)x_0^S+\sigma_t\varepsilon$, with noise level
$\sigma_t=\sigma(t)$ and independent standard Gaussian noises
$\xi,\varepsilon$.

Since the student score is not available in closed form, the critic
learns a velocity field $v_\phi$ that provides this estimate through
\begin{equation}
  s_\phi(x_t,t,c)
  =-\frac{x_t+(1-\sigma_t)v_\phi(x_t,t,c)}{\sigma_t}.
  \label{eq:score-velocity}
\end{equation}
With the generator fixed, $v_\phi$  is trained on detached student rollouts
using flow matching (FM)~\cite{lipman2022flow}:
\begin{equation}
  \mathcal L_{\mathrm{FM}}
  \propto\E\!\left[\left\|v_\phi(x_t,t,c)
  -(\varepsilon-x_0^S)\right\|_2^2\right].
  \label{eq:critic-fm}
\end{equation}
Through Eq.~\eqref{eq:score-velocity}, FM regression provides the
student-score estimate used in the generator update.
DMD alternates between generator and fake-score updates, using
independently sampled student rollouts for each update.

These alternating updates face two obstacles to preserving interaction dynamics.

\paragraph{Teacher-side mode locking:} for a re-noised student rollout
$x_t$, the teacher's clean-video posterior satisfies:
\begin{equation}
    p_T(x_0\mid x_t,c)\propto
    p_T(x_0\mid c)\mathcal N\!\left(
    x_t;(1-\sigma_t)x_0,\sigma_t^2\mathbf I\right).
\end{equation}
At high signal-to-noise ratio
$\operatorname{SNR}_t=(1-\sigma_t)^2/\sigma_t^2$,
the Gaussian likelihood favors clean videos near $x_0^S$,
so the posterior may remain localized around a motion-deficient
student rollout.
Since the teacher score is determined by this posterior's
mean~\cite{efron2011tweedie}, even an accurate score may provide
little guidance for recovering motion.
Appendices~\ref{app:dmd-details}--\ref{app:mode-locking} relate the score difference to posterior means and analyze this mode locking in a two-mode model.
Stronger re-noising relaxes this local constraint but weakens
appearance evidence, motivating re-noise sampling adapted to each
rollout's current interaction fidelity.
\paragraph{Critic lag:} the score estimate $s_\phi$ must track a changing
student distribution. With finitely many updates, $v_\phi$ may remain
inaccurate; Eq.~\eqref{eq:score-velocity} passes this error into the
generator's score difference~\cite{dmd2,rtdmd,fan2026phased}.
This motivates allocating limited critic fitting effort to rollouts that
are harder to track.

\subsection{Teacher-Velocity Turning Prior Schedule}
\label{sec:curvature}
Stronger re-noising can recover motion but may also introduce artifacts (Fig.~\ref{fig:teacher-recovery}). To guide timestep selection, we seek noise intervals where the teacher's denoising direction undergoes pronounced reorganization. Let $\widehat v_T(x_\sigma,\sigma,c)$ be its sampled velocity. The operator $C$
centers the velocity
field by subtracting the future-block temporal mean. Along trajectory $m$, the centered
velocity and its unit direction are
\begin{equation}
  v_m^{\mathrm c}(\sigma)=C\widehat v_T(x_{m,\sigma},\sigma,c_m),
  \qquad
  u_m(\sigma)=\frac{v_m^{\mathrm c}(\sigma)}
  {\|v_m^{\mathrm c}(\sigma)\|_2}.
  \label{eq:interaction-direction}
\end{equation}
We measure the change in teacher velocity direction per unit
noise level between adjacent sampling steps:
\begin{equation}
  \widehat\kappa_{m,k}
  =\frac{\arccos\!\left(u_m(\sigma_k)^\top u_m(\sigma_{k+1})\right)}
  {|\sigma_{k+1}-\sigma_k|}.
  \label{eq:interaction-turning}
\end{equation}
Its continuous limit is $\kappa_m(\sigma)=\|du_m/d\sigma\|_2$.
Small turning indicates a locally stable direction; large turning marks pronounced directional reorganization.

\paragraph{Posterior interpretation.}
To connect turning to clean-video predictions, use the conditional
field $v_T$ in Eq.~\eqref{eq:interaction-direction}. Along trajectory $m$,
the clean posterior and its mean satisfy
\begin{equation}
\begin{aligned}
  \pi_{m,\sigma}(x_0)&=p_T(x_0\mid x_{m,\sigma},c_m),\\
  \mu_{m,\sigma}&=x_{m,\sigma}-\sigma v_T(x_{m,\sigma},\sigma,c_m).
\end{aligned}
\label{eq:posterior-velocity}
\end{equation}
Along a flow trajectory, $dx/d\sigma=v_T$ implies
$d\mu/d\sigma=-\sigma\,dv_T/d\sigma$; thus velocity turning requires a
change in the posterior mean. For neighboring noise levels, the local KL
expansion~\cite{nielsen2020elementary} measures variation of the full posterior:
\begin{equation}
  \KL(\pi_{m,\sigma}\,\|\,\pi_{m,\sigma+\delta})
  =\tfrac12\mathcal I_m(\sigma)\delta^2+o(\delta^2),
  \label{eq:posterior-local-kl}
\end{equation}
where $\mathcal I_m(\sigma)$ is the posterior-path Fisher information, quantifying local posterior sensitivity to $\sigma$ along the trajectory.
The following result links this posterior variation to turning.

\begin{proposition}[Turning witnesses posterior variation]
\label{prop:curvature-fisher}
For regular posterior paths of the exact conditional flow, suppose
$\sigma\|v_m^{\mathrm c}(\sigma)\|_2$ is uniformly bounded away from zero.
Suppose the centered posterior covariance, projected orthogonally to $u_m$,
has uniformly bounded trace.
Then a constant $C_I>0$ exists such that
\begin{equation}
  \E_m[\mathcal I_m(\sigma)]\geq C_I\bar\kappa(\sigma)^2,
  \qquad \bar\kappa(\sigma)=\E_m[\kappa_m(\sigma)].
  \label{eq:mean-curvature-fisher}
\end{equation}
\end{proposition}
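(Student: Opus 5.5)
The argument chains three inequalities: Fisher information dominates the squared speed of the posterior mean (a Cramér--Rao/score inequality), the speed of the posterior mean dominates the speed of the velocity, and the speed of the velocity dominates the turning rate of its direction. Averaging over trajectories with Jensen then gives Eq.~\eqref{eq:mean-curvature-fisher}.

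\textbf{Step 1 (Fisher bounds mean speed).} For a regular path $\pi_{m,\sigma}$, write $\dot\ell=\partial_\sigma\log\pi_{m,\sigma}(x_0)$, so that $\mathcal I_m(\sigma)=\E_{\pi}[\dot\ell^2]$. Since $\E_\pi[\dot\ell]=0$, we have $\dot\mu_{m,\sigma}=\E_\pi[(x_0-\mu)\dot\ell]$. For any unit vector $w$, Cauchy--Schwarz gives
\[
(w^\top\dot\mu)^2\le (w^\top\Sigma_{m,\sigma}w)\,\mathcal I_m(\sigma),
\]
where $\Sigma_{m,\sigma}$ is the posterior covariance. I will choose $w$ along $P_m\dot\mu$, with $P_m$ the projection that applies the centering $C$ and is orthogonal to $u_m$. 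Then $w^\top\Sigma w\le\operatorname{tr}(P_m C\Sigma C^\top P_m)\le B$ by the bounded-trace hypothesis. This yields
\[
\|P_m C\dot\mu\|_2^2\le B\,\mathcal I_m(\sigma).
\]

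\textbf{Step 2 (mean speed bounds turning).} Differentiating Eq.~\eqref{eq:posterior-velocity} along the flow $dx/d\sigma=v_T$ gives $\dot\mu=-\sigma\dot v_T$. Because $C$ is linear and $\sigma$-independent, $C\dot\mu=-\sigma\,\dot v_m^{\mathrm c}$. Writing $v^{\mathrm c}=\|v^{\mathrm c}\|u$ and differentiating, $\dot v^{\mathrm c}=\tfrac{d}{d\sigma}\|v^{\mathrm c}\|\,u+\|v^{\mathrm c}\|\dot u$ with $\dot u\perp u$. Applying $P_m$ therefore removes the radial term:
\[
P_mC\dot\mu=-\sigma\|v_m^{\mathrm c}\|\,\dot u_m.
\]
Hence $\|P_mC\dot\mu\|_2=\sigma\|v_m^{\mathrm c}\|\,\kappa_m\ge c_0\kappa_m$, where $c_0$ is the uniform lower bound on $\sigma\|v^{\mathrm c}_m\|$. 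Combining with Step~1 gives $\mathcal I_m\ge (c_0^2/B)\,\kappa_m^2$ pathwise.

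\textbf{Step 3 (average).} Take $\E_m$ and apply Jensen, $\E_m[\kappa_m^2]\ge\bar\kappa^2$. This gives the claim with $C_I=c_0^2/B$.

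\textbf{Main obstacle.} The delicate point is Step~1's interplay with the centering operator. The covariance hypothesis concerns the \emph{centered} covariance projected orthogonally to $u_m$, so I must apply the Cauchy--Schwarz inequality to the linear functional $x_0\mapsto w^\top P_mCx_0$ rather than to $x_0$ itself, and check that $\E_\pi[\dot\ell\,Cx_0]=C\dot\mu$.

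This also requires regularity to differentiate under the integral and to have $\E_\pi[\dot\ell]=0$. I take this to be exactly what ``regular posterior paths'' supplies, together with the exact-flow assumption needed for the identity $\dot\mu=-\sigma\dot v_T$. If the identity only holds for the sampled field $\widehat v_T$ up to error, the bound would pick up an additive term. The hypothesis of the exact conditional flow is meant to exclude this.
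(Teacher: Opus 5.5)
Your proposal is correct and follows the paper's proof essentially step for step. The shared steps are: the identity $\dot\mu=-\sigma\dot v$ along the characteristic; the tangential-projection formula $\kappa_m=\|P_mH\dot\mu\|_2/(\sigma\|Hv\|_2)$, where $H$ is your $C$; Cauchy--Schwarz against the posterior-path score $\ell_{m,\sigma}$ to bound $\|P_mH\dot\mu\|_2^2$ by the projected covariance trace times $\mathcal I_m(\sigma)$; and the two uniform bounds followed by Jensen. The only cosmetic difference is that you apply Cauchy--Schwarz to the scalar functional $x_0\mapsto w^\top P_mCx_0$, so the covariance enters through $C^\top P_m w$ (as your last paragraph rightly insists, since Step~1 as first written conflates spaces) and is bounded first by the top eigenvalue and then by the trace, whereas the paper applies the vector form of Cauchy--Schwarz directly to obtain the trace.
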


The bound motivates the squared mean turning rate as a geometric
indicator of posterior variation
(proof in Appendix~\ref{app:curvature-geometry}), supporting a sampling
prior that emphasizes noise intervals with pronounced denoising-direction
reorganization.
In practice, we profile 32 trajectories from the PF-Wan
teacher~\cite{physisforcing} using its official 40-step solver with a
CFG scale of 5. The measured turning rates are first averaged across
trajectories to estimate $\bar\kappa(\sigma)$, then smoothed over
$\sigma$ to obtain $\widetilde\kappa(\sigma)$
(Fig.~\ref{fig:curvature}).
The teacher-velocity-turning prior schedule is defined on $\mathcal T$,
the timestep range used for re-noising:
\begin{equation}
  h_\kappa(t)=(\widetilde\kappa(\sigma_t)+\epsilon_\kappa)^2,
  \qquad
  \nu_\kappa(t)=\frac{h_\kappa(t)}
  {\int_{\mathcal T}h_\kappa(u)\,du}.
  \label{eq:curvature-query-measure}
\end{equation}
The offset $\epsilon_\kappa>0$ preserves support.

\begin{figure}[t]
  \centering
  \includegraphics[width=0.85\columnwidth]{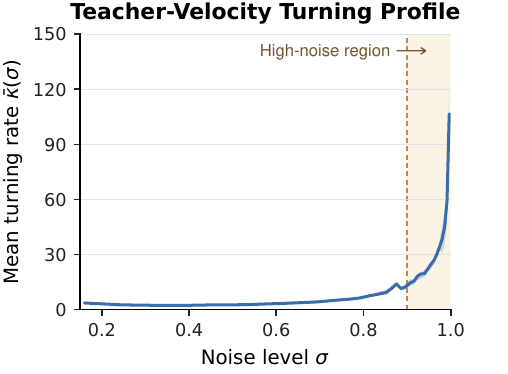}
  \caption{\textbf{Teacher-velocity turning profile.} Mean turning rate of
  future-block-centered velocity across PF-Wan teacher trajectories. The light-blue band shows the interquartile range across trajectories; pale-yellow shading highlights the high-noise region $\sigma\in[0.9,1]$.}
  \label{fig:curvature}
  \vspace{-2mm}
\end{figure}

\subsection{\texorpdfstring{\raggedright Temporal Affinity--Conditioned \mbox{Re-noise Sampling}}{Temporal Affinity--Conditioned Re-noise Sampling}}
\label{sec:router}

The teacher prior $\nu_\kappa$ emphasizes noise intervals with pronounced
denoising-direction reorganization, but is fixed across rollouts.
We adapt its mixture with the base schedule $\nu_{\mathrm{base}}$ to each rollout's current
interaction fidelity. To assess this fidelity, we compare multi-scale temporal changes between the decoded student video $V_i^S$ and its paired target $V_i^\star$ in a frozen video encoder's feature space. Their pooled features and temporal differences are
\begin{equation}
\begin{aligned}
  Z_i^a&=\Pool(\Phi(V_i^a)),\qquad a\in\{S,\star\},\\
  (\Delta_\ell Z_i^a)_{j,u}
  &=(Z_i^a)_{j+\ell,u}-(Z_i^a)_{j,u},
\end{aligned}
\label{eq:routing-features}
\end{equation}
where $\Phi$ is the frozen encoder, instantiated as V-JEPA~2.1~\cite{vjepa21}, and $\Pool$ normalizes and pools tokens within a $3\times3$ spatial grid. The indices $j,u$ denote time and spatial cells, and $\ell\in\{1,2,4\}$ is the temporal lag.
For brevity, write $\delta z_i^a=(\Delta_\ell Z_i^a)_{j,u}$ and let $\langle\cdot\rangle$ denote lag-weighted averaging over time--cell pairs. We use their \emph{temporal affinity} $A_i$ as a feature-space proxy for interaction fidelity:
\begin{equation}
  A_i=
  \frac{2\left\langle[(\delta z_i^S)^\top\delta z_i^\star]_+\right\rangle}
  {\left\langle\|\delta z_i^S\|_2^2+\|\delta z_i^\star\|_2^2\right\rangle},
  \label{eq:motion-score}
\end{equation}
where $[\,\cdot\,]_+=\max(\cdot,0)$ and $A_i\in[0,1]$. Identical temporal changes give one, while mismatches in direction or
magnitude reduce affinity.

The affinity sets the teacher-prior probability $g_i$ and timestep sampling density:
\begin{equation}
\begin{aligned}
  g_i&=\frac{\alpha}{\alpha+(1-\alpha)A_i},\qquad 0<\alpha\leq1,\\
  \nu_i(t)&=(1-g_i)\nu_{\mathrm{base}}(t)+g_i\nu_\kappa(t).
\end{aligned}
\label{eq:routed-density}
\end{equation}
Here $\alpha$ sets the minimum teacher-prior probability. Low-affinity rollouts receive more teacher-prior
sampling, while well-matched rollouts retain more base-schedule coverage
for appearance refinement. Both generator and critic sample $t_i\sim\nu_i$, with $\nu_i$ computed from their respective rollouts. This aligns critic training with the noise-level allocation used for generator updates.

\begin{table*}[t]
  \centering
  \caption{\textbf{Main comparison on embodied-video benchmarks.}
  The teacher and undistilled student are reference models and are separated from the two
  matched four-step DMD variants. Bold with blue shading marks the better result within the
  matched pair. NFE counts denoiser evaluations, including both CFG branches. Benchmark
  scores are percentages and higher is better. R-Bench embodiment columns are abbreviated
  as Dual (dual-arm), Hum. (humanoid), Single (single-arm), and Quad. (quadruped).}
  \label{tab:main_comparison}

  \small
  \setlength{\tabcolsep}{2pt}
  \renewcommand{\arraystretch}{1.08}
  \begin{tabular*}{\textwidth}{@{\extracolsep{\fill}}l*{16}{c}@{}}
    \toprule
    \multirow{2}{*}{Method} & \multirow{2}{*}{Params} & \multirow{2}{*}{NFE}
    & \multicolumn{5}{c}{R-Bench: TAC}
    & \multicolumn{5}{c}{PAI-Bench-G}
    & \multicolumn{4}{c}{EZS-Bench} \\
    \cmidrule(lr){4-8}\cmidrule(lr){9-13}\cmidrule(l){14-17}
    & & & Overall & Dual & Hum. & Single & Quad.
    & Domain & Space & Physics & Time & Quality
    & Domain & Space & Physics & Time \\
    \midrule
    PF-Wan & 14B & 80
      & 75.3 & 72.5 & 79.8 & 71.8 & 77.2
      & 89.0 & 91.8 & 92.2 & 88.4 & 76.9
      & 83.2 & 80.9 & 89.2 & 78.0 \\
    Wan2.1-Fun & 1.3B & 100
      & 47.3 & 42.5 & 53.2 & 30.8 & 63.2
      & 79.0 & 79.7 & 82.5 & 76.4 & 75.3
      & 75.5 & 71.7 & 83.9 & 68.8 \\
    \midrule
    Base DMD & 1.3B & 4
      & 34.6 & 25.2 & 33.4 & 25.8 & 54.0
      & 74.4 & 77.0 & \cellcolor{intdrei!70}\textbf{82.8} & 64.7
      & 78.1
      & 73.3 & 66.9 & \cellcolor{intdrei!70}\textbf{86.0} & 64.8 \\
    \textbf{\method} & 1.3B & 4
      & \cellcolor{intdrei!70}\textbf{44.2}
      & \cellcolor{intdrei!70}\textbf{38.8}
      & \cellcolor{intdrei!70}\textbf{48.0}
      & \cellcolor{intdrei!70}\textbf{31.2} 
      & \cellcolor{intdrei!70}\textbf{58.8}
      & \cellcolor{intdrei!70}\textbf{79.5}
      & \cellcolor{intdrei!70}\textbf{82.4} 
      & 81.2
      & \cellcolor{intdrei!70}\textbf{73.5}
      & \cellcolor{intdrei!70}\textbf{78.1}
      & \cellcolor{intdrei!70}\textbf{76.2}
      & \cellcolor{intdrei!70}\textbf{71.8} & 84.4
      & \cellcolor{intdrei!70}\textbf{70.8} \\
    \bottomrule
  \end{tabular*}
\end{table*}

\begin{figure*}[t!]
  \centering
  \includegraphics[width=.98\textwidth]{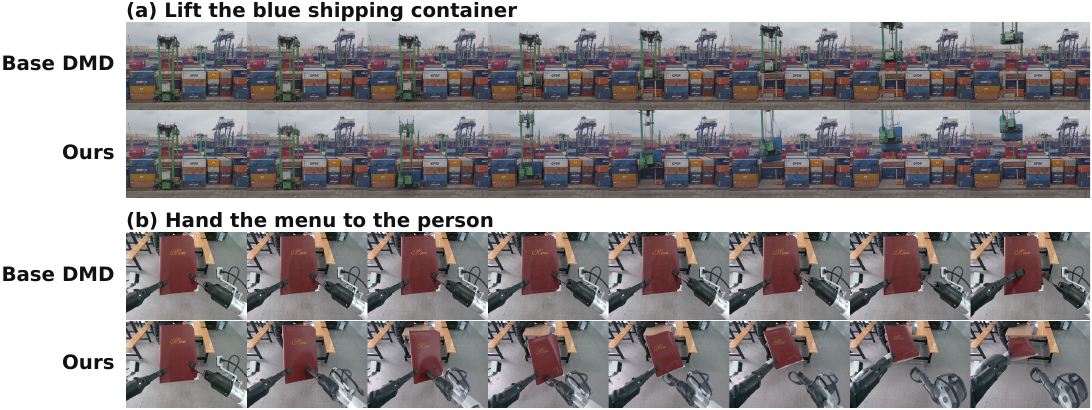}
  \caption{\textbf{Visual comparison of interaction dynamics between Base DMD and \method at four NFE.} Uniformly spaced frames from two
  PAI-Bench cases. Top: lifting a blue shipping container. Bottom:
  handing a menu to a person. Base DMD fails to lift the target container and produces
  limited handover motion, whereas \method realizes the instructed interaction.}
  \label{fig:qualitative}
  \vspace{-1mm}
\end{figure*}

\subsection{Dynamics-Guided Fake-Score Tracking}
\label{sec:critic-reweight}

Even at a fixed noise level, critic fitting difficulty varies across rollouts: stronger latent motion is associated with higher FM loss (Fig.~\ref{fig:motion-tracking-probe}). We  therefore use latent temporal dynamics to predict
noise-relative fitting difficulty and allocate
fitting effort within a fixed critic-update budget. 

To compare rollouts evaluated at different noise levels, we center
log-FM loss by a noise-bin baseline $b_{k(t_i)}$, an exponential
moving average of log-FM loss in the bin containing $t_i$. We define
the \emph{noise-relative fitting difficulty} as
\begin{equation}
  r_i=\sg[\log\ell_i^{\mathrm{FM}}-b_{k(t_i)}].
\label{eq:difficulty-prediction}
\end{equation}
A positive $r_i$ indicates higher log-FM loss than the running
baseline at a comparable noise level.

We predict this difficulty from VAE latent dynamics, measured in the
critic's FM regression space. For a detached student latent
$X_i=\sg[x_{0,i}^S]$, a fixed map $\mathcal D_{\mathrm{temp}}$ aggregates
RMS magnitudes of multi-lag latent differences and displacements from
the first block using temporal statistics such as the mean and
standard deviation. A lightweight MLP $f$ maps the descriptor $d_i$
and a noise-level embedding $e(t_i)$ to the predicted fitting difficulty
$\widehat r_i$:
\begin{equation}
\begin{aligned}
  d_i&=\mathcal D_{\mathrm{temp}}(X_i),\\
  \widehat r_i&=f(d_i,e(t_i)).
\end{aligned}
\label{eq:critic-descriptor}
\end{equation}
We train $f$ online with a Huber loss $\mathcal L_{\mathrm{pred}}$
against $r_i$. Descriptor details are given in
Appendix~\ref{app:critic-details}.

To emphasize predicted-hard rollouts within the existing critic updates,
we exponentiate $\widehat r_i$ and normalize the weights to unit mean
over the global minibatch of size $B$:
\begin{equation}
  w_i=\frac{\exp(\beta\widehat r_i)}
  {B^{-1}\sum_{j=1}^B\exp(\beta\widehat r_j)}.
\label{eq:critic-sample-weight}
\end{equation}
Here $\beta>0$ controls the strength on predicted-hard rollouts.
The training objective is
\begin{equation}
  \mathcal L_C=\frac1B\sum_{i=1}^B\sg[w_i]\,\ell_i^{\mathrm{FM}}(\phi)
  +\lambda_{\mathrm{pred}}\mathcal L_{\mathrm{pred}}.
  \label{eq:difficulty-critic-objective}
\end{equation}
Rollouts with higher predicted fitting difficulty receive larger coefficients in the
critic gradient. Detached weights ensure that $f$ learns only
from $\mathcal L_{\mathrm{pred}}$ and all auxiliary modules are training-only.

\section{Experiments}
\label{sec:experiments}

\subsection{Experimental Setup}
\label{sec:setup-exp}

\paragraph{Models and training.}

We distill the PF-Wan 14B image-to-video teacher~\cite{physisforcing} into a bidirectional 1.3B student. PF-Wan is fine-tuned on robotic manipulation videos with interaction-focused physics supervision, making it a suitable teacher for transferring interaction dynamics to a few-step student.
The student and fake-score model are initialized from Wan2.1-Fun-V1.1-1.3B-InP~\cite{aigc_apps_VideoX_Fun_2026}.
The teacher uses its official 40-step, shift-5 sampler with CFG 5;
the student generates 81-frame videos in four denoiser evaluations.
A frozen V-JEPA 2.1-L encoder provides temporal-affinity features.
We train on 32,897 clips curated from RoVid-X~\cite{rovidx}
through semantic deduplication, image--text alignment, and
motion/visibility filtering.
All DMD variants use the same 2,500-step training budget and
effective batch size of 32. Tracking reweights samples without
changing the number of critic updates.
We set the teacher-prior probability
floor to $\alpha=0.3$ and critic-weighting strength to $\beta=0.5$.
Data curation and implementation details are provided in
Appendices~\ref{sec:data} and~\ref{sec:impl}, with
hyperparameters listed in Table~\ref{tab:method-hparams}.

\begin{table}[t]
  \centering
  \caption{\textbf{Sampling and fake-score tracking ablation.}
  Sampling denotes temporal affinity--conditioned re-noise sampling; Tracking denotes
  dynamics-guided critic weighting. All variants retain the critic and use
  the same update and inference budgets. Bold with blue shading marks
  the best result in each column.}
  \label{tab:component_ablation}

  \scriptsize
  \setlength{\tabcolsep}{2pt}
  \renewcommand{\arraystretch}{1.12}
  \begin{tabular*}{\columnwidth}{@{\extracolsep{\fill}}lcccccc@{}}
    \toprule
    \multirow{2}{*}{Method}
    & \multicolumn{2}{c}{Components}
    & \multicolumn{1}{c}{R-Bench}
    & \multicolumn{2}{c}{PAI-Bench-G}
    & \multicolumn{1}{c}{EZS-Bench} \\
    \cmidrule(lr){2-3}\cmidrule(lr){4-4}\cmidrule(lr){5-6}\cmidrule(l){7-7}
    & Sampling & Tracking & TAC & Domain & Quality & Domain \\
    \midrule
    Base DMD
    & $\times$ & $\times$ & 34.6
    & 74.4 & 78.1 & 73.3 \\
    w/o Tracking
    & $\checkmark$ & $\times$ & 43.5
    & 78.5 & 78.1 & 75.1 \\
    w/o Sampling
    & $\times$ & $\checkmark$ & 38.3
    & 75.4 & 78.0 & 74.0 \\
    \textbf{\method}
    & $\checkmark$ & $\checkmark$ 
    & \cellcolor{intdrei!70}\textbf{44.2}
    & \cellcolor{intdrei!70}\textbf{79.5} 
    & \cellcolor{intdrei!70}\textbf{78.1} 
    & \cellcolor{intdrei!70}\textbf{76.2} \\
    \bottomrule
  \end{tabular*}
\end{table}

\paragraph{Evaluation.}
For in-domain evaluation, we report Task-Adherence Consistency (TAC)
on 400 held-out image--prompt pairs from R-Bench~\cite{rovidx},
spanning four robot embodiments.
We assess cross-dataset generalization on the robot subset of
PAI-Bench-G~\cite{zhou2026pai} (174 samples) and
EZS-Bench~\cite{abotphysworld} (196 samples), reporting the Domain
score alongside Space, Physics, and Time scores.
We also report the official PAI-Bench-G Quality score to assess
visual quality.
All models are evaluated with the same initial images, prompts,
and random seeds.
For EZS-Bench, we follow PhysisForcing~\cite{physisforcing}
in compressing the original descriptions into action-centric prompts
for video generation.
Detailed evaluation protocols are provided in
Appendix~\ref{sec:benchmark-context}.

\subsection{Main Results}
\label{sec:main}

As shown in Table~\ref{tab:main_comparison}, our student uses 4 NFE, versus 80 for the teacher and 100
for the undistilled student.
Base DMD, which uses the base re-noise schedule and uniform critic
weights, substantially degrades interaction fidelity: R-Bench TAC falls to
$34.6$ and PAI Domain to $74.4$, despite an increase in PAI Quality
from $75.3$ to $78.1$.
These results indicate that Base DMD preserves visual quality but does not adequately retain task-relevant interaction dynamics.

At the same four-NFE budget, \method better preserves interaction dynamics, raising R-Bench TAC to $44.2$ ($+9.6$ points), and PAI/EZS Domain to
$79.5/76.2$ ($+5.1/+2.9$ points). Base DMD scores $1.6$ points higher on Physics in PAI-Bench-G because its near-static outputs preserve plausible physical states without executing the instructed interaction (Fig.~\ref{fig:qualitative}).

\begin{figure}[t]
  \centering
  \includegraphics[width=\columnwidth]{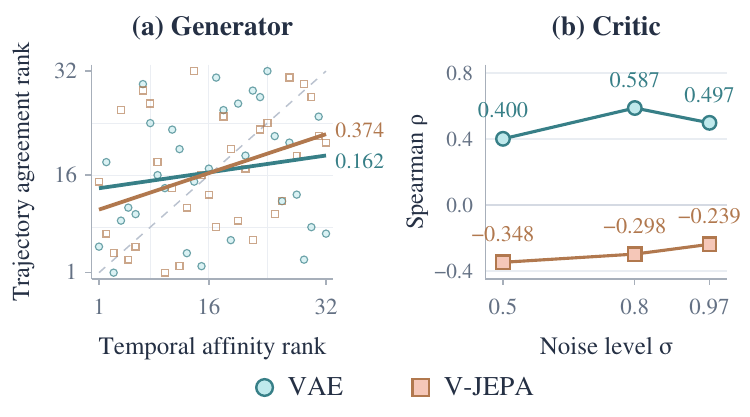}
  \caption{\textbf{Representations for generator sampling and critic tracking.}
  (a) Rank comparison between temporal affinity and CoTracker3-based trajectory agreement across 32 student--target video pairs. Solid lines show linear fits and the dashed diagonal denotes identical rankings. (b) Spearman $\rho$ between MLP predictions and log-FM loss, computed separately at each noise level on a 32-rollout bank. FM losses are averaged over three independent noise draws per rollout and noise level.}
  \label{fig:representation-probes}
\end{figure}

\subsection{Representation Analysis for the Generator and Critic}
\label{sec:representation-analysis}

We compare VAE and V-JEPA representations for two training signals:
target-relative temporal affinity for generator sampling and noise-relative
fitting difficulty for critic tracking. Each probe uses a frozen
32-rollout bank.

\paragraph{Generator representations.}
We compute temporal affinity $A_i$ (Section~\ref{sec:router}) in both spaces.
As an independent motion reference, we compute a
\emph{trajectory agreement} score $T_i$ using point trajectories
tracked by CoTracker3~\cite{cotracker3}.
This score measures how closely the generated video matches
the target video in the directions and magnitudes of point displacements.
Figure~\ref{fig:representation-probes}(a) compares the rankings
induced by $A_i$ and $T_i$.
V-JEPA achieves a higher Spearman correlation with trajectory
agreement than VAE ($0.374$ vs.\ $0.162$), indicating that its
temporal affinity better reflects target-relative motion
consistency on these rollouts.

\paragraph{Critic representations.}
We fit small offline MLPs to predict log-FM loss from temporal
descriptors and evaluate them on held-out rollouts.
Figure~\ref{fig:representation-probes}(b) reports Spearman
correlations computed separately at each noise level
$\sigma\in\{0.5,0.8,0.97\}$.
VAE-based predictions are positively correlated with the
measured loss across all tested noise levels, whereas
V-JEPA-based predictions are consistently negatively correlated.
At a fixed noise level, subtracting the shared baseline
preserves the ranking of log-FM losses, so this probe evaluates
the sample ordering for noise-relative fitting difficulty.
These results support VAE descriptors for critic tracking,
complementing V-JEPA features for target-relative motion assessment.
Appendix~\ref{app:head-ranking} evaluates the trained
online predictor. Its predictions remain strongly correlated with measured FM loss, supporting its ability
to distinguish samples by critic fitting difficulty.

\subsection{Ablation Study}
\label{sec:ablation}

\paragraph{Re-noise sampling and fake-score tracking.}
Table~\ref{tab:component_ablation} evaluates the individual and combined
effects of two components.
Sampling alone raises R-Bench TAC/PAI Domain from $34.6/74.4$ to
$43.5/78.5$, while tracking alone achieves $38.3/75.4$.
Adding tracking to adaptive sampling further improves R-Bench TAC/PAI Domain
to $44.2/79.5$ and EZS Domain from $75.1$ to $76.2$, with PAI Quality
unchanged at $78.1$.
These results show that fake-score tracking complement adaptive re-noise sampling.

\begin{table}[t]
  \centering
  \caption{\textbf{Re-noise sampling-schedule ablation.}
  We compare Base DMD with a high-noise control (Shift-25, raising the flow
  shift from 5 to 25), the teacher prior, and our temporal
  affinity--conditioned sampler.
  All variants disable critic reweighting ($\beta=0$).
  Bold with blue shading marks the best result in each column.}
  \label{tab:routing_ablation}
  \footnotesize
  \setlength{\tabcolsep}{3.5pt}
  \renewcommand{\arraystretch}{1.10}
  \begin{tabular*}{\columnwidth}{@{\extracolsep{\fill}}lcccc@{}}
    \toprule
    \multirow{2}{*}{Sampling schedule}
    & \multicolumn{1}{c}{R-Bench}
    & \multicolumn{2}{c}{PAI-Bench-G}
    & \multicolumn{1}{c}{EZS-Bench} \\
    \cmidrule(lr){2-2}\cmidrule(lr){3-4}\cmidrule(l){5-5}
    & TAC & Domain & Quality & Domain \\
    \midrule
    Base DMD        & 34.6 & 74.4 & 78.1 & 73.3 \\
    Shift-25        & 40.0 & 77.2 & \cellcolor{intdrei!70}\textbf{78.2} & 74.6 \\
    Teacher prior   & 42.5  & 77.9 & 78.0 & 75.0 \\
    Affinity-conditioned & \cellcolor{intdrei!70}\textbf{43.5}
                    & \cellcolor{intdrei!70}\textbf{78.5} 
                    & 78.1
                    & \cellcolor{intdrei!70}\textbf{75.1} \\
    \bottomrule
  \end{tabular*}
\end{table}

\begin{table}[t]
  \centering
  \caption{\textbf{Sensitivity to critic-weighting strength $\beta$.}
  We vary only $\beta$, with $\alpha=0.3$. The $\beta=0.5$ row repeats
  the full-model results from Table~\ref{tab:main_comparison}.
  Bold with blue shading marks column-wise best values, including ties.}
  \label{tab:beta_sensitivity}
  \footnotesize
  \setlength{\tabcolsep}{3.5pt}
  \renewcommand{\arraystretch}{1.10}
  \begin{tabular*}{\columnwidth}{@{\extracolsep{\fill}}lcccc@{}}
    \toprule
    \multirow{2}{*}{$\beta$}
    & \multicolumn{1}{c}{R-Bench}
    & \multicolumn{2}{c}{PAI-Bench-G}
    & \multicolumn{1}{c}{EZS-Bench} \\
    \cmidrule(lr){2-2}\cmidrule(lr){3-4}\cmidrule(l){5-5}
    & TAC & Domain & Quality & Domain \\
    \midrule
    $0.1$ & 43.8 & 79.1 & 78.0& 75.4\\
    $0.5$ & \cellcolor{intdrei!70}\textbf{44.2}
          & \cellcolor{intdrei!70}\textbf{79.5}
          & \cellcolor{intdrei!70}\textbf{78.1}
          & \cellcolor{intdrei!70}\textbf{76.2} \\
    $1.0$ & 43.6 & 78.4 & 78.1 & 75.1 \\
    \bottomrule
  \end{tabular*}
\end{table}

\paragraph{Re-noise sampling schedules.}

Table~\ref{tab:routing_ablation} compares re-noise schedules with critic reweighting
disabled ($\beta=0$). We choose Shift-25 to approximately match the teacher prior's
probability mass at $\mathrm{SNR}<1$ (Appendix~\ref{sec:curvature-sampling}).
Increasing the shift from $5$ to $25$ raises R-Bench TAC/PAI Domain from
$34.6/74.4$ to $40.0/77.2$, supporting stronger re-noising for motion recovery.
The teacher prior further improves them to $42.5/77.9$ at similar total
high-noise mass, supporting timestep sampling informed by the teacher's flow
geometry across noise levels. Affinity conditioning achieves the best
R-Bench TAC and Domain scores ($43.5/78.5/75.1$), supporting rollout-specific
adaptation beyond a fixed prior.

\paragraph{Critic-weighting strength.}

At fixed $\alpha=0.3$ (Table~\ref{tab:beta_sensitivity}), both
$\beta=0.1$ and $0.5$ improve TAC and both Domain scores over
uniform critic weighting, supporting moderate emphasis on
motion-related hard rollouts. Increasing $\beta$ from $0.5$ to
$1.0$ reduces R-Bench TAC and Domain scores, suggesting that excessive emphasis on hard
rollouts compromise score estimation on typical samples
under a fixed update budget.

\subsection{Qualitative Results}
\label{sec:qualitative}
As illustrated in Fig.~\ref{fig:qualitative}, base DMD often produces nearly static videos or limited gripper motion
without completing the intended interaction. In contrast, \method
generates coordinated robot--object motion while preserving scene
appearance. In the container-lifting example, Base DMD moves the lifting
mechanism without lifting the target container, whereas \method lifts
the blue container from the stack. In the menu-handover example,
Base DMD moves the grippers but fails to complete the handover,
whereas \method depicts the robot successfully passing the menu
to the person.
\begin{table}[t]
  \centering
  \caption{\textbf{WorldArena action-planner success rate (\%).}
  Upper block: results reported in PhysisForcing~\cite{physisforcing}. Lower block: our 1.3B backbones with separately trained,
  task-specific VPP heads. Bold with blue shading compares the lower block only.}
  \label{tab:downstream}
  \small
  \setlength{\tabcolsep}{4pt}
  \begin{tabular*}{\columnwidth}{@{\extracolsep{\fill}}lrrr@{}}
    \toprule
    Model & \shortstack{Adjust\\Bottle} & \shortstack{Click\\Bell} & Avg. \\
    \midrule
    Genie Envisioner & 10.0 & 20.0 & 15.0 \\
    TesserAct & 1.0 & 35.0 & 18.0 \\
    RoboMaster & 8.0 & 20.0 & 14.0 \\
    Vidar & 2.0 & 19.0 & 10.5 \\
    WoW & 20.0 & 21.0 & 20.5 \\
    Wan2.2-TI2V-5B & 12.0 & 20.0 & 16.0 \\
    \midrule
    Base DMD (Wan2.1-1.3B) & 1.0 & 31.0 & 16.0 \\
    \textbf{\method} (Wan2.1-1.3B) & \cellcolor{intdrei!70}\textbf{2.0}
      & \cellcolor{intdrei!70}\textbf{66.0}
      & \cellcolor{intdrei!70}\textbf{34.0} \\
    \bottomrule
  \end{tabular*}
\end{table}

\subsection{Downstream Action Planning}
\label{sec:downstream}

Following WorldArena action-planner protocol~\cite{worldarena}, we compare Base DMD and \method as frozen 1.3B backbones on Adjust Bottle and
Click Bell in RoboTwin~2.0~\cite{robotwin2}. For each frozen 1.3B backbone, we train task-specific
VPP heads~\cite{vpp} to predict actions from intermediate DiT features, using 50 demonstrations per task for 200 epochs. We report closed-loop success rates over 100 episodes per task. Implementation details are in Appendix~\ref{app:downstream}.

\begin{samepage}
Using \method as the backbone raises mean success from Base DMD's
$16.0\%$ to $34.0\%$ (Table~\ref{tab:downstream}).
The gain comes mainly from Click Bell ($31.0\%\to66.0\%$),
with a smaller increase on Adjust Bottle ($1.0\%\to2.0\%$).
\end{samepage}

\section{Conclusion}
\label{sec:conclusion}

We presented \method for preserving interaction dynamics in few-step video world models. Temporal affinity--conditioned re-noise sampling
adapts the mixture of the teacher-velocity-turning prior and the base schedule to each rollout's current interaction fidelity.
To improve score estimation for rollouts with stronger motion,
dynamics-guided fake-score tracking reweights the critic loss
according to predicted noise-relative fitting difficult. The resulting four-step 1.3B student improves R-Bench TAC and
Domain scores on PAI-Bench-G and EZS-Bench over Base DMD without auxiliary inference modules. As a frozen backbone for downstream action planning, the student also improves the mean success rate from Base DMD's $16\%$ to $34\%$ across two WorldArena tasks.
\paragraph{Limitations.}
\label{sec:limitations}
The turning profile is teacher- and schedule-specific and must be re-estimated for a new teacher.
\clearpage
{
    \footnotesize
    \bibliographystyle{ieeenat_fullname}
    \bibliography{main}
}

\clearpage
\setcounter{page}{1}
\renewcommand{\thepage}{S\arabic{page}}
\maketitlesupplementary
\appendix
\setcounter{table}{0}
\renewcommand{\thetable}{S\arabic{table}}
\setcounter{figure}{0}
\renewcommand{\thefigure}{S\arabic{figure}}

\section{Training-Set Curation}
\label{sec:data}

We curate a training set of 32,897 clips from the 2,824,039-clip
RoVid-X corpus~\cite{rovidx} through three stages, prioritizing
semantic diversity and measurable interaction motion.

\paragraph{Stage 1: semantic caption deduplication.}
We first remove exact duplicates from the \texttt{short\_caption}
field, obtaining 1,374,834 distinct captions. We then encode these
captions with a ViT-L/14 text encoder and $\ell_2$-normalize the
embeddings. To remove semantically redundant captions, we apply
a greedy covering procedure with a cosine similarity threshold
of 0.95. We process captions in descending order of frequency,
retaining each remaining caption as a representative and
discarding all remaining captions whose similarity to it is
at least 0.95. Unlike connected-component clustering, this
procedure avoids merging captions through chains of intermediate
matches and retains 448,528 representatives.

\paragraph{Stage 2: image--text filtering.}
For each retained clip, we decode the first frame and compute
its similarity to the caption using ViT-L/14 image and text
embeddings. We retain the top 50\% by similarity, leaving
224,264 clips. This stage filters visual--textual mismatches
that caption-only deduplication cannot detect.

\paragraph{Stage 3: motion and visibility filtering.}
We track points with CoTracker3~\cite{cotracker3} and retain
clips with at least 50 selected tracks and a mean visibility
of at least 0.6. This retains 14.7\% of the remaining clips,
yielding the final training set of 32,897 clips.

\begin{table}[t]
  \centering
  \small
  \setlength{\tabcolsep}{4pt}
  \begin{tabular}{llrr}
    \toprule
    Stage & Criterion & Input & Output \\
    \midrule
    Deduplication & greedy cover, $0.95$ & 2,824,039 & 448,528 \\
    Image--text & top-$50\%$ cosine & 448,528 & 224,264 \\
    Motion & tracks$\geq50$, vis.$\geq0.6$ & 224,264 & 32,897 \\
    \bottomrule
  \end{tabular}
  \caption{Training-data curation funnel.}
  \label{tab:funnel}
\end{table}

\section{Implementation Details}
\label{sec:impl}

\subsection{Image-to-video conditioning}

The 1.3B student and critic use the Wan2.1-Fun I2V
interface~\cite{aigc_apps_VideoX_Fun_2026} and share the
16-channel Wan2.1 latent space with the PF-Wan teacher.
For image conditioning, we construct a video containing
the input image as its first frame and zeros in all subsequent
frames, then encode it with the Wan2.1 VAE. An accompanying
mask marks only the first latent temporal block as observed.

The student performs four bidirectional denoising evaluations at model times
$\{1,0.75,0.5,0.25\}$ with flow shift 5. It generates 21 latent temporal blocks, decoded to
81 frames at 16 FPS. The teacher uses its official 40-step, shift-5, CFG-5 schedule.
During DMD, teacher-score evaluations use CFG 5; the critic is conditional without CFG.

\subsection{Video features for temporal affinity}
\label{app:affinity-features}

We sample 16 RGB frames uniformly over the 81-frame video. V-JEPA produces
an $8\times24\times32\times1024$ token grid. We normalize its tokens, pool them
within a fixed $3\times3$ full-frame grid, and normalize the pooled features.Paired targets are cached as 8 × 9 × 1024 fp16 features.

\subsection{Teacher prior schedule and re-noise sampling}
\label{sec:curvature-sampling}

The base DMD schedule samples raw model time uniformly
and applies the flow-shift transformation with shift 5.
For $u\sim\mathcal U[0,1)$, the shift and physical noise coordinate are
\begin{equation}
  \sigma_t=t=f_s(u)=1000\,\frac{s\,(u)}{1+(s-1)(u)}
\end{equation}
We sample re-noising timesteps within $\mathcal T=[0.020,0.999)$. For $s>1$, the shift moves raw
times toward higher noise. 

For the Shift-25 control in Table~\ref{tab:routing_ablation}, we choose $s=25$ to approximately match the teacher prior's
mass in the high-noise region $\mathrm{SNR}_t<1$. On the common interval $\mathcal T=[0.020,0.999)$, the normalized
CDFs give probabilities of $96.14\%$ for Shift-25 and $95.69\%$ the
teacher prior.

\section{Geometry and Posterior Interpretation of Teacher-Velocity Turning}
\label{app:curvature-geometry}

We relate DMD's score difference to posterior means, analyze high-SNR
mode locking in a two-mode model, and prove the connection between
teacher-velocity turning and posterior variation.

\subsection{Score difference and posterior means}
\label{app:dmd-details}

Let $q_{\theta,t}$ and $p_{T,t}$ denote the noisy marginals obtained by
applying the re-noising process in Section~\ref{sec:dmd-opd} to the
clean student and teacher distributions, with exact scores $s_{q,t}$
and $s_T$.
Let $\mu_{q,t}$ and $\mu_{T,t}$ be the corresponding clean posterior
means conditioned on the same $(x_t,c)$. Gaussian conditioning gives
\begin{equation}
  s_{q,t}-s_T
  =\frac{1-\sigma_t}{\sigma_t^2}(\mu_{q,t}-\mu_{T,t}).
  \label{eq:posterior-score}
\end{equation}
Thus, the exact score difference depends on the two posterior means;
the analysis below shows how high SNR can cause the teacher posterior to
concentrate near a motion-deficient student rollout.

\subsection{Mode-Locking Analysis}
\label{app:mode-locking}

\begin{proposition}[High-SNR mode locking]
\label{prop:mode-locking-appendix}
Consider a teacher with two clean modes
$p_{T,0}=\pi_s\delta_{x_s}+\pi_m\delta_{x_m}$, where $\delta_x$ is a Dirac measure,
$\pi_s,\pi_m>0$, $\pi_s+\pi_m=1$, $x_s$ is a static mode, and $x_m$ is a moving mode.
Re-noise a collapsed student sample as
$x_t=(1-\sigma_t)x_s+\sigma_t\varepsilon$, $\varepsilon\sim\mathcal N(0,\mathbf I)$, and let
$d=x_m-x_s$ and $\operatorname{SNR}_t=(1-\sigma_t)^2/\sigma_t^2$. The teacher posterior odds
and their expected log value satisfy
\begin{equation}
  \begin{aligned}
  \log\frac{p_T(x_m\mid x_t,c)}{p_T(x_s\mid x_t,c)}
  &=\log\frac{\pi_m}{\pi_s}
   -\frac{\operatorname{SNR}_t}{2}\|d\|_2^2\\
   &\quad+\frac{1-\sigma_t}{\sigma_t}\langle\varepsilon,d\rangle,\\
  \E_\varepsilon\!\left[\log\frac{p_T(x_m\mid x_t,c)}
  {p_T(x_s\mid x_t,c)}\right]
  &=\log\frac{\pi_m}{\pi_s}
   -\frac{\operatorname{SNR}_t}{2}\|d\|_2^2.
  \end{aligned}
  \label{eq:app-posterior-odds}
\end{equation}
The expected log-odds of the moving mode decrease with SNR and squared
mode separation, favoring the static mode at high SNR.
\end{proposition}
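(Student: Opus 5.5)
The plan is a direct Bayes computation, since the teacher prior here is a two-point mixture. With $p_T(x_0\mid c)=\pi_s\delta_{x_s}+\pi_m\delta_{x_m}$, the posterior formula of Section~\ref{sec:dmd-opd} reduces to a ratio of two prior-weighted Gaussian likelihoods:
\[
\frac{p_T(x_m\mid x_t,c)}{p_T(x_s\mid x_t,c)}
=\frac{\pi_m}{\pi_s}\cdot
\frac{\mathcal N(x_t;(1-\sigma_t)x_m,\sigma_t^2\mathbf I)}{\mathcal N(x_t;(1-\sigma_t)x_s,\sigma_t^2\mathbf I)}.
\]
The normalizing constants of the two Gaussians are identical and cancel, so the log-odds equal
\[
\log\frac{\pi_m}{\pi_s}-\frac{1}{2\sigma_t^2}\Bigl(\|x_t-(1-\sigma_t)x_m\|_2^2-\|x_t-(1-\sigma_t)x_s\|_2^2\Bigr).
\]

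The next step substitutes the re-noised collapsed sample $x_t=(1-\sigma_t)x_s+\sigma_t\varepsilon$. This gives
\[
x_t-(1-\sigma_t)x_s=\sigma_t\varepsilon,
\qquad
x_t-(1-\sigma_t)x_m=\sigma_t\varepsilon-(1-\sigma_t)d,
\]
with $d=x_m-x_s$. Expanding the squared norms, the $\sigma_t^2\|\varepsilon\|_2^2$ terms cancel. What remains is
\[
(1-\sigma_t)^2\|d\|_2^2-2\sigma_t(1-\sigma_t)\langle\varepsilon,d\rangle.
\]
Dividing by $-2\sigma_t^2$ yields $-\tfrac{\operatorname{SNR}_t}{2}\|d\|_2^2+\tfrac{1-\sigma_t}{\sigma_t}\langle\varepsilon,d\rangle$, which is the first identity.

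For the expectation, $\E_\varepsilon\langle\varepsilon,d\rangle=0$ because $\varepsilon$ is centered and $d$ is deterministic. Linearity then gives the second identity. The monotonicity claim follows because the expected log-odds are affine in $\operatorname{SNR}_t\|d\|_2^2$ with coefficient $-1/2$. In addition, $\operatorname{SNR}_t$ is decreasing in $\sigma_t$ on $(0,1)$, so weaker re-noising favors the static mode.

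There is no genuine obstacle here. The only care point is justifying Bayes' rule with Dirac priors: the posterior is supported on $\{x_s,x_m\}$ with weights proportional to prior times likelihood, which is legitimate since both likelihood values are strictly positive. One may also note that the random term has variance $\operatorname{SNR}_t\|d\|_2^2$. It therefore does not dominate the drift, although this remark is not needed for the statement.
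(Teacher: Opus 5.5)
Your proposal is correct and follows the paper's proof: Bayes' rule turns the posterior odds into the prior odds times the ratio of two Gaussian likelihoods, and expanding the squared distances after substituting $x_t=(1-\sigma_t)x_s+\sigma_t\varepsilon$ gives both identities. You just write out the algebra the paper leaves implicit. One minor point is that your closing remark, that the fluctuation does not dominate the drift, holds only when $\sqrt{\operatorname{SNR}_t}\,\|d\|_2$ is large, but as you note it is not needed for the statement.
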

\begin{proof}
Bayes' rule reduces the posterior odds to the prior odds times the ratio of the two Gaussian
likelihoods. Expanding their squared distances gives Eq.~\eqref{eq:app-posterior-odds}.
\end{proof}

\subsection{Posterior Variation and Proof of
Proposition~\ref{prop:curvature-fisher}}

\paragraph{Setup.}
For $L$ latent temporal blocks $v=(v_1,\ldots,v_L)$,
define the linear operator
\begin{equation}
  Hv=\operatorname{vec}\Big[
  v_j-\tfrac{1}{L-1}\textstyle\sum_{\ell=2}^L v_\ell
  \Big]_{j=2}^L.
  \label{eq:temporal-projection}
\end{equation}
The operator $H$ excludes the observed block and subtracts
the temporal mean over the future blocks.

We consider the linear noising process
$x_\sigma=(1-\sigma)x_0+\sigma\varepsilon$, where
$\varepsilon$ is standard Gaussian noise independent of
$x_0$. Let $v_T$ denote its exact conditional velocity
field. Along a characteristic $x_{m,\sigma}$ satisfying
$d x_{m,\sigma}/d\sigma
=v_T(x_{m,\sigma},\sigma,c_m)$, define
\begin{equation}
\begin{aligned}
  \pi_{m,\sigma}(x_0)
  &=p_T(x_0\mid x_{m,\sigma},c_m),\\
  \mathcal I_m(\sigma)
  &=\E_{\pi_{m,\sigma}}\!\left[
  \left(
  \frac{d}{d\sigma}\log\pi_{m,\sigma}(x_0)
  \right)^2
  \right].
\end{aligned}
\label{eq:posterior-path-fisher}
\end{equation}
Here the derivative follows the trajectory
$x_{m,\sigma}$. Write
$\mu_{m,\sigma}=\E_{\pi_{m,\sigma}}[x_0]$,
$\Sigma_{m,\sigma}
=\operatorname{Cov}_{\pi_{m,\sigma}}(x_0)$, and
$v_{m,\sigma}=v_T(x_{m,\sigma},\sigma,c_m)$.
Define the centered velocity and its direction as
\[
  v_m^{\mathrm c}(\sigma)=Hv_{m,\sigma},
  \qquad
  u_m(\sigma)=
  \frac{v_m^{\mathrm c}(\sigma)}
       {\|v_m^{\mathrm c}(\sigma)\|_2},
\]
with $P_{m,\sigma}=\mathbf I-u_mu_m^\top$ and
$\kappa_m(\sigma)=\|\dot u_m(\sigma)\|_2$.
Dots denote total derivatives along the characteristic.
We assume a regular posterior path, $\sigma>0$,
nonzero centered velocity, and a positive projected
covariance trace.

\paragraph{Pointwise bound.}
For the linear noising process, the exact conditional
velocity satisfies
\begin{equation}
  \begin{aligned}
  v_{m,\sigma}
  &=\E_T[\varepsilon-x_0\mid x_{m,\sigma},c_m]\\
  &=\frac{x_{m,\sigma}-\mu_{m,\sigma}}{\sigma}.
  \end{aligned}
  \label{eq:app-conditional-flow}
\end{equation}
Differentiating
$\mu_{m,\sigma}=x_{m,\sigma}-\sigma v_{m,\sigma}$
along the characteristic gives
\begin{equation}
  \dot\mu_{m,\sigma}
  =-\sigma\dot v_{m,\sigma}.
  \label{eq:app-mean-velocity}
\end{equation}
Since $H$ is fixed, differentiation of the normalized
centered velocity yields
\begin{equation}
  \dot u_m
  =\frac{P_{m,\sigma}H\dot v_{m,\sigma}}
  {\|Hv_{m,\sigma}\|_2}.
\end{equation}
Combining this identity with
Eq.~\eqref{eq:app-mean-velocity} gives
\begin{equation}
  \kappa_m(\sigma)
  =\frac{
    \|P_{m,\sigma}H\dot\mu_{m,\sigma}\|_2
  }{
    \sigma\|Hv_{m,\sigma}\|_2
  }.
  \label{eq:app-posterior-turning}
\end{equation}

To relate posterior-mean variation to
$\mathcal I_m(\sigma)$, define the posterior-path score
\begin{equation}
  \ell_{m,\sigma}(x_0)
  =\frac{d}{d\sigma}\log\pi_{m,\sigma}(x_0).
\end{equation}
Normalization gives
$\E_{\pi_{m,\sigma}}[\ell_{m,\sigma}]=0$.
Differentiating the posterior mean under the integral
therefore yields
\begin{equation}
  \dot\mu_{m,\sigma}
  =\E_{\pi_{m,\sigma}}\!\left[
    (x_0-\mu_{m,\sigma})\ell_{m,\sigma}(x_0)
  \right].
  \label{eq:app-posterior-mean-score}
\end{equation}
Applying $P_{m,\sigma}H$ and using
Cauchy--Schwarz gives
\begin{equation}
  \begin{aligned}
  &\|P_{m,\sigma}H\dot\mu_{m,\sigma}\|_2^2\\
  &\quad\leq
  \operatorname{tr}\!\left(
    P_{m,\sigma}H\Sigma_{m,\sigma}
    H^\top P_{m,\sigma}
  \right)\mathcal I_m(\sigma).
  \end{aligned}
  \label{eq:app-cauchy}
\end{equation}
Substituting Eq.~\eqref{eq:app-posterior-turning}
and rearranging establishes the pointwise bound:
\begin{equation}
  \mathcal I_m(\sigma)\geq
  \frac{
    \sigma^2\|v_m^{\mathrm c}(\sigma)\|_2^2
    \kappa_m(\sigma)^2
  }{
    \operatorname{tr}\!\left(
      P_{m,\sigma}H\Sigma_{m,\sigma}
      H^\top P_{m,\sigma}
    \right)
  }.
  \label{eq:curvature-fisher-bound}
\end{equation}

\paragraph{Average bound.}
Suppose there exist constants $c_v>0$ and
$0<C_\Sigma<\infty$ such that, uniformly over the
trajectories and noise interval under consideration,
\begin{equation}
  \begin{aligned}
  \sigma\|Hv_{m,\sigma}\|_2&\geq c_v,\\
  \operatorname{tr}\!\left(
    P_{m,\sigma}H\Sigma_{m,\sigma}
    H^\top P_{m,\sigma}
  \right)&\leq C_\Sigma.
  \end{aligned}
  \label{eq:app-nondegeneracy}
\end{equation}
These conditions keep the noise-scaled centered velocity
away from zero and uniformly bound the projected
posterior covariance. Taking expectations in
Eq.~\eqref{eq:curvature-fisher-bound} and applying
Jensen's inequality gives
\begin{equation}
  \begin{aligned}
  \E_m[\mathcal I_m(\sigma)]
  &\geq\frac{c_v^2}{C_\Sigma}
  \E_m[\kappa_m(\sigma)^2]\\
  &\geq\frac{c_v^2}{C_\Sigma}
  \bigl(\E_m[\kappa_m(\sigma)]\bigr)^2\\
  &=C_I\bar\kappa(\sigma)^2,
  \end{aligned}
\end{equation}
where $C_I=c_v^2/C_\Sigma$ and
$\bar\kappa(\sigma)=\E_m[\kappa_m(\sigma)]$.
This proves Eq.~\eqref{eq:mean-curvature-fisher}.

\paragraph{Connection to the empirical statistic.}
The analysis assumes the exact conditional velocity
field and its continuous flow trajectories. In practice,
we estimate turning from discrete states along numerical
trajectories of the CFG-guided teacher field, which need
not coincide with the exact conditional field.
The proposition therefore motivates the empirical
turning statistic without establishing the exact solution.

\section{Diagnostic Protocols}

\subsection{Representations for temporal affinity}
\label{app:representation-probes}

\paragraph{Protocol.}
We compare VAE and V-JEPA temporal affinities $A_i$ on 32 frozen student rollouts with paired targets. Both spaces use
Eq.~\eqref{eq:motion-score} with lags $\{1,2,4\}$ and weights
$(0.2,0.3,0.5)$.

As an independent reference, we use CoTracker3~\cite{cotracker3}
to track selected target-video points in each generated video,
initialized at corresponding first-frame coordinates. We normalize
2-D point displacements by image width and height and compare
them at matched relative times. Using the same temporal lags,
weights, and similarity formula as for $A_i$, we obtain a
trajectory-agreement score $T_i$ based on point motion rather
than feature changes. Spearman $\rho$ compares the
rankings of $A_i$ and $T_i$. Generator quartile recall measures how many
of the eight lowest-$T_i$ rollouts are recovered by the eight lowest-$A_i$
rollouts.

\paragraph{Results.}
Figure~\ref{fig:representation-probes}(a) plots temporal-affinity and
trajectory-agreement ranks for all 32 pairs. VAE/V-JEPA affinities attain
Spearman correlations of $0.162/0.374$, Kendall correlations of
$0.117/0.246$, and quartile recalls of $25\%/50\%$.

\subsection{Motion magnitude and critic fitting difficulty}
\label{app:critic-predictability}

\paragraph{Measurement protocol.}
We evaluate 32 frozen rollouts with the step-1500 critic at
$\sigma\in\{0.5,0.8,0.92,0.97\}$.
FM loss is the mean squared velocity error over future latent blocks,
averaged over three independent re-noising draws per rollout and
noise level.
We measure latent motion magnitude by averaging RMS latent differences
over time and lags $\{1,2,4,8\}$

\paragraph{Offline representation probes.}
We fit two-hidden-layer MLPs to predict log-FM loss from VAE and
V-JEPA temporal descriptors, using four-fold cross-validation grouped
by rollout. Standardization and PCA are fitted on training folds only.
Out-of-fold Spearman correlations are computed within each noise level
and averaged over all evaluated levels. VAE descriptors
obtain $0.460\pm0.048$, compared with $-0.254\pm0.125$ for V-JEPA temporal
features. Their hardest-quartile recalls are $36.7\pm5.9\%$ and
$12.5\pm5.7\%$, respectively (chance: $25\%$).

\subsection{Difficulty ranking by the trained predictor}
\label{app:head-ranking}

We evaluate the online-trained predictor from the frozen step-2000
\method checkpoint on 32 fresh rollouts generated with training-time
random denoising exits.
At each noise level, Spearman $\rho$ compares predicted fitting
difficulty $\widehat r_i$ with critic FM loss averaged over three
independent re-noising draws per rollout.
Table~\ref{tab:trained-head-ranking} reports correlations of
$0.758$--$0.952$, showing that the trained predictor captures
rollout-dependent fitting-difficulty rankings.

\begin{table}[tb]
\centering
\caption{\textbf{Difficulty ranking by the trained predictor.}
Predictor $\rho$ is the Spearman correlation between predicted fitting
difficulty $\widehat r_i$ and measured critic FM loss on 32 fresh rollouts.
FM losses average three independent noise draws at each noise level.}
\label{tab:trained-head-ranking}
\small
\setlength{\tabcolsep}{5pt}
\begin{tabular}{lccc}
  \toprule
  Noise $\sigma$ & $0.5$ & $0.8$ & $0.97$ \\
  \midrule
  Predictor $\rho$ & $0.758$ & $0.898$ & $0.952$ \\
  \bottomrule
\end{tabular}
\end{table}

\section{Objective and Optimization Details}

\subsection{Dynamics-guided fake-score tracking}
\label{app:critic-details}

For detached student latents $X_i$, we form RMS-difference sequences
$q_{i,j}^{(\ell)}=\operatorname{RMS}(X_{i,j+\ell}-X_{i,j})$
at lags $\ell\in\{1,2,4,8\}$. The fifth sequence measures RMS latent differences relative to the first block. Each sequence $q$ is summarized by
$S(q)=[\operatorname{mean}(q),\operatorname{std}(q),\max(q),q_{\mathrm{last}}]$.
We concatenate the summaries in lag order $1,2,4,8$, followed by the
first-block displacement summary, into a 20-dimensional vector.
Transforming each component $x$ to $\log(1+x)$ gives the descriptor $d_i$.
The two-hidden-layer MLP
receives $d_i$ together with the fixed Fourier noise-level embedding
$e(t_i)$ to predict fitting difficulty $\widehat r_i$.

Using the per-rollout mean squared FM loss over future latent blocks,
the noise-bin baseline and predictor supervision are updated as
\begin{equation}
\begin{aligned}
  b_k&\leftarrow\rho_b b_k+(1-\rho_b)
  \underset{i:k(t_i)=k}{\operatorname{mean}}
  \log(\ell_i^{\mathrm{FM}}+\epsilon_c),\\
  r_i&=\sg[\log(\ell_i^{\mathrm{FM}}+\epsilon_c)-b_{k(t_i)}],\\
  \mathcal L_{\mathrm{pred}}
  &=\frac1B\sum_{i=1}^B\operatorname{Huber}(\widehat r_i,r_i).
\end{aligned}
\label{eq:difficulty-target}
\end{equation}
Here $\rho_b$ is the EMA decay and $\epsilon_c>0$ stabilizes the logarithm.
Bin statistics are aggregated across workers; bins without samples retain
their previous baseline.

Table~\ref{tab:method-hparams} collects the default hyperparameters used.

\begin{table}[t]
  \centering
  \caption{Default hyperparameters for re-noise sampling and fake-score tracking.}
  \label{tab:method-hparams}
  \small
  \setlength{\tabcolsep}{4pt}
  \begin{tabular}{ll}
    \toprule
    Quantity & Choice \\
    \midrule
    Latent measurement / spatial size & 21 blocks \\
    V-JEPA input / output blocks & 16 frames / 8 blocks \\
    Temporal-affinity lags / weights & $\{1,2,4\}$ / $(0.2,0.3,0.5)$ \\
    Teacher-prior probability floor $\alpha$ & $0.3$ \\
    Turning-rate offset $\epsilon_\kappa$ & $0.1\,\mathrm{median}(\widetilde\kappa)$ \\
    Critic timestep sampling & Same rule as generator \\
    Latent descriptor lags & $\{1,2,4,8\}$ \\
    Descriptor dimension & $20$ \\
    Critic residual ema decay $\rho_{b}$ & $0.99$ \\
    Difficulty predictor & Two-hidden-layer MLP \\
    Critic-weighting strength $\beta$ & $0.5$ \\
    Predictor loss weight $\lambda_{\mathrm{pred}}$ & $1.0$ \\
    \bottomrule
  \end{tabular}
\end{table}

\section{Benchmark Protocols and Metrics}
\label{sec:benchmark-context}

\paragraph{R-Bench.}
We evaluate on 400 image--prompt pairs from the embodiment split of R-Bench~\cite{rovidx}, spanning four robot embodiments(dual-arm, humanoid, single-arm, and
quadruped). GPT-5 assigns each video a 1--5 task-completion rating $r_v$ from a grid of six uniformly sampled frames. We normalize scores as $(r_v-1)/4$ and compute Overall by averaging first within each embodiment, then equally across the four embodiments

\paragraph{PAI-Bench-G.}
We evaluate the 174-condition robot subset of PAI-Bench-G~\cite{zhou2026pai}, comprising
913 binary questions. Eight frames are sampled uniformly and judged with Qwen3-VL-235B
using the official binary-VQA templates.
Domain averages per-video accuracies, while Space, Physics, and Time micro-average question accuracies within each axis, assessing spatial interactions, physical plausibility, and temporal progression, respectively.

PAI Quality is the mean of eight normalized dimensions: subject consistency,
background consistency, motion smoothness, aesthetic quality, imaging quality, overall
consistency, and I2V subject/background fidelity. Table~\ref{tab:pai_quality_breakdown} gives the breakdown.

\paragraph{EZS-Bench.}
EZS-Bench~\cite{abotphysworld} contains 196 unseen robot--task--scene conditions and 2,272
binary questions. We use the same per-video and per-axis aggregation as PAI-Bench-G.   Under the official two-model evaluation protocol, Qwen3-VL-32B-Thinking
generates the questions and physics checklists, while
Qwen2.5-VL-72B-Instruct scores the videos. Folloing PhysisForcing~\cite{physisforcing}, we use GPT-5 to compress each original description into a 40--60-word action-centric prompt, using the first-frame image as context. The original questions and checklists remain unchanged for scoring.

\begin{table}[t]
  \centering
  \caption{\textbf{PAI-Bench-G Quality breakdown.}
  Quality is the unweighted mean of the eight official metrics. SC: subject consistency;
  BC: background consistency; MS: motion smoothness; AQ: aesthetic quality; IQ: imaging
  quality; OC: overall consistency; IS/IB: image-to-video subject/background fidelity.
  All values are percentages; higher is better. Column-wise best values,
  including ties, are bold and shaded blue.}
  \label{tab:pai_quality_breakdown}

  \footnotesize
  \setlength{\tabcolsep}{3.5pt}
  \renewcommand{\arraystretch}{1.08}
  \begin{tabular*}{\columnwidth}{@{\extracolsep{\fill}}lccccc@{}}
    \toprule
    Method & Quality & SC & BC & MS & AQ \\
    \midrule
    PF-Wan 14B
    & 76.9 & 90.3 & 90.6 & 98.9 & \cellcolor{intdrei!70}\textbf{50.1} \\
    Wan2.1-Fun 1.3B
    & 75.3 & 87.8 & 91.6 & 98.6 & 47.6 \\
    \midrule
    Base DMD
    & 78.1
    & 95.3
    & 91.9 & 99.5 & 48.6 \\
    \textbf{\method}
    & \cellcolor{intdrei!70}\textbf{78.1}
    & \cellcolor{intdrei!70}\textbf{95.3}
    & \cellcolor{intdrei!70}\textbf{92.2}
    & \cellcolor{intdrei!70}\textbf{99.6}
    & 48.5 \\
    \bottomrule
  \end{tabular*}

  \vspace{2pt}
  \begin{tabular*}{\columnwidth}{@{\extracolsep{\fill}}lcccc@{}}
    \toprule
    Method & IQ & OC & IS & IB \\
    \midrule
    PF-Wan 14B
    & 71.6 & \cellcolor{intdrei!70}\textbf{19.9} & 96.2 & 97.5 \\
    Wan2.1-Fun 1.3B
    & 67.2 & 19.4 & 94.4 & 96.0 \\
    \midrule
    Base DMD
    & \cellcolor{intdrei!70}\textbf{73.2}
    & 19.5 & 97.8 & 98.9dymd \\
    \textbf{\method}
    & 73.1 & 19.6
    & \cellcolor{intdrei!70}\textbf{97.9}
    & \cellcolor{intdrei!70}\textbf{98.9} \\
    \bottomrule
  \end{tabular*}
\end{table}

\begin{figure*}[t!]
  \centering
  \includegraphics[width=.90\textwidth]{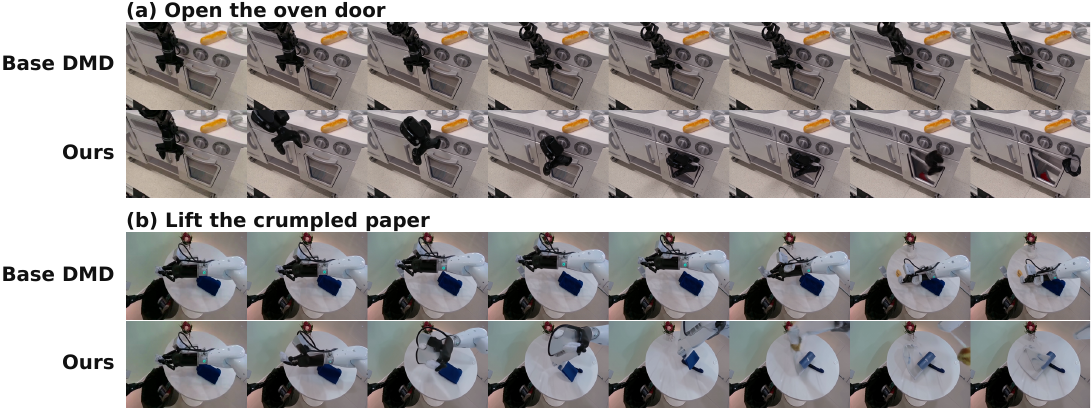}
  \caption{\textbf{Additional interaction-dynamics comparisons at four NFE.}
  Eight uniformly spaced frames are shown for each of two PAI-Bench-G cases,
  with prompts, seeds, and step-1500 checkpoints matched to the main qualitative
  results. Base DMD often preserves the initial configuration or incompletely
  executes the action, whereas \method produces clearer object-directed motion.}
  \label{fig:qualitative-supp}
\end{figure*}
\begin{table}[t]
  \centering
  \caption{\textbf{Video-native versus framewise temporal-affinity features.}
  We compare V-JEPA~2.1-L and DINOv2-L for affinity-conditioned sampling under the same
  training budget. Bold with blue shading marks the
  better result.}
  \label{tab:encoder_ablation}
  \footnotesize
  \setlength{\tabcolsep}{3.5pt}
  \renewcommand{\arraystretch}{1.10}
  \begin{tabular*}{\columnwidth}{@{\extracolsep{\fill}}lcccc@{}}
    \toprule
    \multirow{2}{*}{Encoder}
    & \multicolumn{1}{c}{R-Bench}
    & \multicolumn{2}{c}{PAI-Bench-G}
    & \multicolumn{1}{c}{EZS-Bench} \\
    \cmidrule(lr){2-2}\cmidrule(lr){3-4}\cmidrule(l){5-5}
    & TAC & Domain & Quality & Domain \\
    \midrule
    V-JEPA 2.1-L
    & \cellcolor{intdrei!70}\textbf{43.5}
    & \cellcolor{intdrei!70}\textbf{78.5}
    & \cellcolor{intdrei!70}\textbf{78.1}
    & \cellcolor{intdrei!70}\textbf{75.1} \\
    DINOv2-L & 41.8 & 76.7 & 77.8 & 75.0\\
    \bottomrule
  \end{tabular*}
\end{table}

\subsection{Downstream action-planning protocol}
\label{app:downstream}

\paragraph{Frozen world-model features.}
Following WorldArena's action-planner interface~\cite{worldarena}, we use a VPP inverse-dynamics head~\cite{vpp} to predict actions from frozen student features. At each planning step, we condition the student on the current image and instruction and extract intermediate DiT features at the initial, highest-noise timestep. Feature extraction requires only a single partial forward pass, without completing the denoiser or decoding an RGB rollout.

\paragraph{VPP training.}
We train a separate VPP head for each backbone and task using the
50 demonstrations in \texttt{aloha-agilex\_clean\_50}.
Each training sample pairs an observation with a 45-step sequence
of 14-dimensional joint actions.
The input features span 21 latent temporal blocks, corresponding
to an 81-frame horizon.
VPP uses a six-layer Video Former with eight attention heads and
399 latent queries, followed by an action-denoising head.

We train for 200 epochs with AdamW, using a learning rate of $10^{-4}$,
weight decay of $0.05$, and $(\beta_1,\beta_2)=(0.9,0.9)$,
with eight processes and a batch size of eight per process.
For Adjust Bottle, we evaluate the Base DMD and \method heads at
steps 13904 and 15010, respectively; for Click Bell, the corresponding
steps are 4509 and 5319.

\paragraph{Closed-loop evaluation.}
VPP predicts 45-action chunks with ten action-denoising steps.
Actions are executed in RoboTwin~2.0~\cite{robotwin2}, with updated
observations conditioning subsequent predictions. Each backbone is evaluated over 100 episodes per task.

\section{Additional Ablation Studies}
\label{app:additional-ablations}

\subsection{Video-native versus framewise temporal-affinity features}
\label{app:encoder-ablation}

Table~\ref{tab:encoder_ablation} compares video-native V-JEPA~2.1-L with
framewise DINOv2-L~\cite{oquab2023dinov2} for temporal affinity--conditioned
re-noise sampling. For DINOv2, we
encode 16 uniformly sampled frames into $24\!\times\!32$ patch grids,
apply spatial pooling, and average adjacent frame pairs into eight
temporal blocks.
Relative to DINOv2, V-JEPA improves TAC, PAI Domain, Quality, and EZS
Domain by $1.7$, $1.8$, $0.3$, and $0.1$ points, respectively,
supporting video-native features for target-relative interaction assessment.

\subsection{Teacher-prior probability floor}
\label{app:alpha-ablation}

Table~\ref{tab:alpha_sensitivity} varies
$\alpha\in\{0.1,0.3,0.5\}$ with tracking disabled.
Increasing $\alpha$ raises the minimum teacher-prior sampling
probability, allocating more samples under the teacher prior schedule.
The weaker results at $\alpha=0.1$ suggest that these intervals
receive insufficient supervision: increasing $\alpha$ to $0.3$
improves TAC, PAI Domain, and EZS Domain by $4.3$, $1.1$,
and $1.9$ points, respectively. Further increasing $\alpha$ to $0.5$ yields comparable
performance, with TAC increasing by $0.3$ points and both
Domain scores decreasing by $0.3$ points. We therefore use the moderate floor $\alpha=0.3$, which provides sufficient
teacher-prior coverage

\begin{table}[!htb]
  \centering
  \caption{\textbf{Sensitivity to the teacher-prior probability floor $\alpha$.}
  Only $\alpha$ varies, with tracking disabled.
  The $\alpha=0.3$ row repeats the sampling-only result from
  Table~\ref{tab:component_ablation}. Column-wise best values,
  including ties, are bold and shaded blue.}
  \label{tab:alpha_sensitivity}
  \footnotesize
  \setlength{\tabcolsep}{3.5pt}
  \renewcommand{\arraystretch}{1.10}
  \begin{tabular*}{\columnwidth}{@{\extracolsep{\fill}}lcccc@{}}
    \toprule
    \multirow{2}{*}{$\alpha$}
    & \multicolumn{1}{c}{R-Bench}
    & \multicolumn{2}{c}{PAI-Bench-G}
    & \multicolumn{1}{c}{EZS-Bench} \\
    \cmidrule(lr){2-2}\cmidrule(lr){3-4}\cmidrule(l){5-5}
    & TAC & Domain & Quality & Domain \\
    \midrule
    $0.1$ & 39.2 & 77.4 & 78.1 & 73.2 \\
    $0.3$ & 43.5
          & \cellcolor{intdrei!70}\textbf{78.5}
          & \cellcolor{intdrei!70}\textbf{78.1}
          & \cellcolor{intdrei!70}\textbf{75.1} \\
    $0.5$ & \cellcolor{intdrei!70}\textbf{43.8}
          & 78.2 & 78.1 & 74.8 \\
    \bottomrule
  \end{tabular*}
\end{table}

\end{document}